\begin{document}
\title{Rethinking Autoencoders for Medical Anomaly Detection from A Theoretical Perspective}
%
%
\author{Yu Cai  \and  
Hao Chen\inst{(\textrm{\Letter})}  \and  
Kwang-Ting Cheng  
}
\authorrunning{Y. Cai et al.}
%
\institute{The Hong Kong University of Science and Technology, Hong Kong, China \\ 
\email{jhc@cse.ust.hk}}
\maketitle              

\begin{abstract}
Medical anomaly detection aims to identify abnormal findings using only normal training data, playing a crucial role in health screening and recognizing rare diseases. Reconstruction-based methods, particularly those utilizing autoencoders (AEs), are dominant in this field. They work under the assumption that AEs trained on only normal data cannot reconstruct unseen abnormal regions well, thereby enabling the anomaly detection based on reconstruction errors. However, this assumption does not always hold due to the mismatch between the reconstruction training objective and the anomaly detection task objective, rendering these methods theoretically unsound. This study focuses on providing a theoretical foundation for AE-based reconstruction methods in anomaly detection. By leveraging information theory, we elucidate the principles of these methods and reveal that the key to improving AE in anomaly detection lies in minimizing the information entropy of latent vectors. Experiments on four datasets with two image modalities validate the effectiveness of our theory. To the best of our knowledge, this is the first effort to theoretically clarify the principles and design philosophy of AE for anomaly detection. The code is available at \url{https://github.com/caiyu6666/AE4AD}. 

\keywords{Anomaly detection \and Autoencoders \and Information theory.}
\end{abstract}
\section{Introduction}

In various medical scenarios, including health screening and rare disease recognition, collecting an adequate number of annotated abnormal images presents a challenge due to the diverse range of diseases and the scarcity of abnormal cases. Consequently, anomaly detection, which only necessitates normal samples during training, has emerged as a promising solution for such scenarios.

\begin{figure}
\centering
\includegraphics[width=0.7\linewidth]{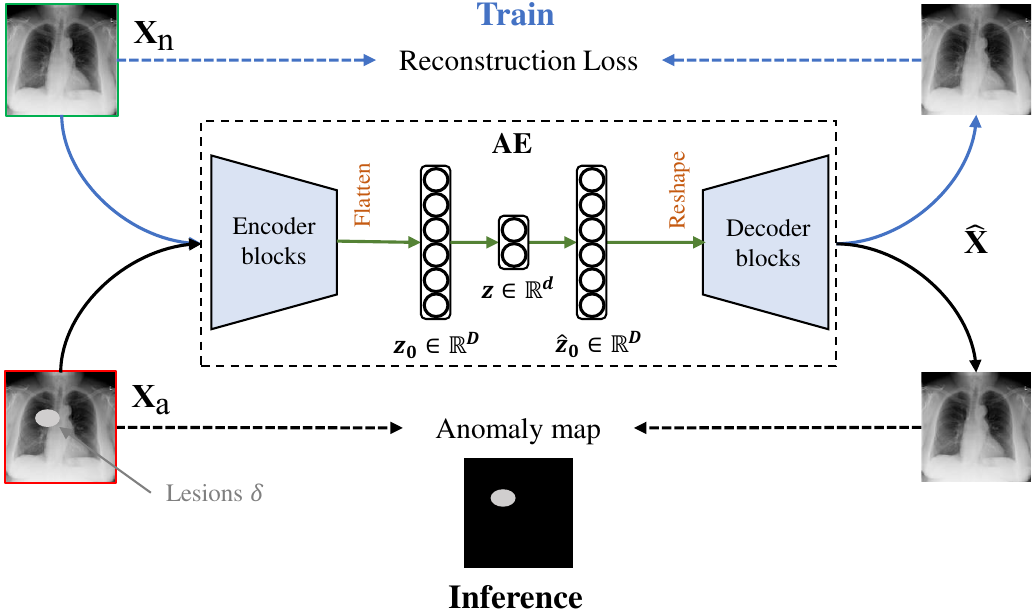}
\caption{Overview of the reconstruction AE-based AD. The model is trained to minimize reconstruction loss on normal images $\mathbf{X}_n$. During inference, lesions $\delta$ in abnormal images $\mathbf{X}_a$ are assumed unable to be reconstructed by the trained model.}
\label{fig:overview_rec}
\end{figure}

Anomaly detection (AD) is a fundamental machine learning problem \cite{chandola2009anomaly,pang2021deep} that aims to detect abnormal samples that deviate from expected normal patterns. It is typically formulated as one-class classification (OCC), where models are exclusively trained on normal data. The trained models should effectively detect samples that deviate from normal patterns, eliminating the need for abnormal training data.  
In the field of medical AD, reconstruction-based methods utilizing AEs have gained popularity as effective strategies \cite{baur2021autoencoders,schlegl2017unsupervised,schlegl2019f,zimmerer2018context,chen2018unsupervised,pinaya2022unsupervised}. Fig.~\ref{fig:overview_rec} depicts an overview of AE in AD. The AE is trained to reconstruct normal images, and during inference, the reconstruction error serves as an anomaly score, indicating the extent of deviation from normality. These methods operate on the \textit{assumption} that (i) a model trained solely on normal images can effectively reconstruct normal regions, but (ii) it cannot reconstruct unseen abnormal regions, resulting in high reconstruction errors in abnormal regions.

Considerable efforts have been made for reconstruction-based AD. To enhance the reconstruction quality of normal images (\textit{assumption} (i)), several methods \cite{baur2019deep,akcay2019ganomaly} introduced adversarial training to AE-based models. Mao et al. \cite{mao2020abnormality} proposed an automatic estimation of pixel-level uncertainty in reconstruction to mitigate unexpected reconstruction errors around normal boundaries. Bercea et al. \cite{bercea2023aes} refined the pseudo-healthy reconstruction using estimated dense deformation fields \cite{balakrishnan2019voxelmorph}. On the other hand, some approaches focus on preventing the reconstruction of abnormal regions (\textit{assumption} (ii)). Gong et al. \cite{gong2019memorizing} proposed to record prototypes of normal training patterns, which are used as inputs to the decoder to facilitate anomaly-free reconstruction. Zimmerer et al. \cite{zimmerer2018context} introduced an inpainting task to aid the model in repairing abnormal regions. You et al. \cite{you2022unified} proposed a transformer architecture with neighbor masked attention to prevent abnormal information from being reconstructed.

However, we notice that \textit{assumption} (ii) does not always hold due to the absence of relevant constraints in the training objectives. While existing methods incorporate various intuitive designs to prevent the reconstruction of anomaly and facilitate \textit{assumption} (ii), they rely on heuristics and lack a solid theoretical foundation, resulting in ambiguous optimal solutions. This lack of a theoretical foundation has led to erroneous arguments, as seen \cite{you2022unified,lu2024hierarchical}, that claim the existence of an "identical shortcut" in reconstruction networks, further contributing to confusion.
Bercea et al. \cite{bercea2023aes} attempted to tackle this problem by summarizing two desired properties for AE: sufficiency and minimality, which guarantee \textit{assumption} (i) and (ii) respectively. However, their analysis remains qualitative. 

This study aims to address this issue by providing a theoretical foundation to explain the workings of reconstruction-based methods and uncover their theoretically optimal solutions. Specifically, we first prove that proper latent dimensions in AE can effectively avoid the "identical shortcut" mentioned in previous works. Additionally, using information theory, we theoretically clarify the principle of AE in AD and establish its optimal solution. Our theory reveals that, apart from the reconstruction loss, another restriction should be imposed on AE to constrain the entropy of its latent space. Ideally, the entropy of the latent space should match that of normal data, enabling effective reconstruction of normal data while preventing the reconstruction of lesions. In experiments, we observe that simple latent dimension reduction is a powerful way to constrain the entropy of the latent space. The AD performance consistently aligns with our theory across varying latent dimensions, with extremely small latent dimensions (e.g., 4) often yielding optimal performance. Our contribution is summarized as follows:
\begin{itemize}
    \item We prove that with an appropriate latent dimension, AE for AD does not encounter the “identical shortcut” problem.
    \item Based on information theory, we provide an explanation of how AE operates in AD, and uncover its theoretically optimal solution.
    \item Experiments on four datasets with two image modalities present consistent trends that align with our proposed theory, validating its effectiveness.
\end{itemize}

\section{The pipeline and limitation of AE in anomaly detection}  \label{sec:preliminary}
As shown in Fig.~\ref{fig:overview_rec}, AE $\phi$ comprises an encoder $f_e$ and a decoder $f_d$. The encoder compresses the input image $\mathbf{X} \in \mathbb{R}^{C \times H \times W}$ into a compact latent vector $\mathbf{Z} = f_e(\mathbf{X}) \in \mathbb{R}^d$, and the decoder maps the latent vector back to the image space $\hat{\mathbf{X}} = f_d(\mathbf{Z}) \in \mathbb{R}^{C \times H \times W}$. We formally denote the normal image as $\mathbf{X}_n$ and the abnormal image as $\mathbf{X}_a$. In the case of typical medical images like chest X-rays and brain MRIs, each abnormal image $\mathbf{X}_a$ can be understood as the corresponding healthy version $\mathbf{X}_n$ with the addition of lesion regions $\delta$, i.e., $\mathbf{X}_a = \mathbf{X}_n + \delta$ \cite{chen2020unsupervised}. The training objective is to minimize the reconstruction loss on normal images:
\begin{equation}  \label{eq:train}
    \min \mathbb{E}[\Vert \phi(\mathbf{X}_n) - \mathbf{X}_n \Vert^2],
\end{equation}
then the well-trained AE is  expected to achieve the following ideal objectives:
\begin{align} \label{eq:obj1}
  & \hat{\mathbf{X}}_n = \phi(\mathbf{X}_n) \xrightarrow{} \mathbf{X}_n          \\ 
  & \hat{\mathbf{X}}_a = \phi(\mathbf{X}_a) = \phi(\mathbf{X}_n+\delta) \xrightarrow{} \mathbf{X}_n,       \label{eq:obj2}
\end{align} 
where $\hat{\mathbf{X}}_n$ and $\hat{\mathbf{X}}_a$ represent the reconstructions generated by AE. Consequently, the reconstruction error, $\mathcal{A}_{rec}(\mathbf{X}) = \Vert \hat{\mathbf{X}} - \mathbf{X} \Vert^2$, will be $\mathbf{0}$ if $\mathbf{X}=\mathbf{X}_n$ and $\delta^2$ if $\mathbf{X}=\mathbf{X}_a$, thereby highlighting the abnormal regions.

However, the performance of AE is suboptimal due to a mismatch between the training objective (Eq.~\ref{eq:train}) and the ideal task objectives (Eq.~\ref{eq:obj1} and \ref{eq:obj2}). While the training objective encourages AE to generate reconstructions that are identical to the network input, the expected reconstructions of abnormal images differ from the input during inference. This discrepancy can lead to false negatives, as the AE may successfully reconstruct some abnormal regions due to the generalization ability of deep neural networks. 
In the extreme case, if AE learns a function $\phi(\mathbf{X})=\mathbf{X}, \forall \mathbf{X}$, it perfectly satisfies the training objective, but cannot detect any anomalies. This phenomenon is called "identical shortcut" \cite{you2022unified,lu2024hierarchical}.
To better understand this issue and seek potential solutions, we conduct a thorough theoretical analysis of properties of AE in the following.

\section{Theoretical analysis of AE in anomaly detection}  \label{sec:theory}
This section theoretically analyzes AE in AD. We firstly elucidate its fundamental property of reconstruction to establish that an AE with an appropriate latent dimension effectively avoids the "identical shortcuts", and then leverage information theory to derive the optimal solution for AE.

\subsection{An inherent property of AE}
While previous papers \cite{you2022unified,lu2024hierarchical} attribute the failure of \textit{assumption} (ii) to identical shortcut, we contend that such a statement is erroneous as it overlooks the changes in the latent dimension. To rectify this misconception, we present theoretical evidence in Proposition \ref{prop:1} to substantiate our argument.

\begin{proposition} \label{prop:1}
Given an AE (Fig.~\ref{fig:overview_rec}), let $\mathbf{Z}_0 \in \mathbb{R}^D$ be the feature vector before the latent vector $\mathbf{Z} \in \mathbb{R}^d$, $\hat{\mathbf{Z}}_0 \in \mathbb{R}^D$ be the feature vector after $\mathbf{Z}$. Then, if $d < \frac{D}{2}$, the AE cannot learn identical mapping. 
\end{proposition}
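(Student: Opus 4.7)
The plan is to exploit the dimensional bottleneck: every trajectory from $\mathbf{Z}_0$ to $\hat{\mathbf{Z}}_0$ must traverse $\mathbb{R}^d$, so the reconstruction factors as $h = g_d \circ g_e$ with $g_e : \mathbb{R}^D \to \mathbb{R}^d$ and $g_d : \mathbb{R}^d \to \mathbb{R}^D$. If the AE realized the identical mapping $\phi(\mathbf{X}) = \mathbf{X}$, then, since the layers outside of the bottleneck preserve directional variation of $\mathbf{X}$, the intermediate map $h$ would itself have to agree with the identity on an open region of $\mathbb{R}^D$. The problem therefore reduces to showing that no such composition $g_d \circ g_e$ can be the identity on an open set when $d < D/2$.

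I would split the argument into two complementary angles. First, a rank/Jacobian calculation: wherever $g_e$ and $g_d$ are differentiable, the chain rule gives $J_h = J_{g_d}\, J_{g_e}$, a product of a $D \times d$ matrix with a $d \times D$ matrix, whose rank is at most $d$; but $\mathrm{id}_{\mathbb{R}^D}$ has Jacobian of rank $D$, so $h$ must disagree with the identity on any neighborhood on which both pieces are smooth. Second, a parameter-counting argument that brings in the specific constant $D/2$: model the two affine layers straddling the bottleneck as $W_e \in \mathbb{R}^{d \times D}$ and $W_d \in \mathbb{R}^{D \times d}$; forcing $h = \mathrm{id}$ reduces to $W_d W_e = I_D$, which is a system of $D^2$ scalar equations in only $2 d D$ unknowns. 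When $d < D/2$ we have $2 d D < D^2$, so the system is overdetermined and no choice of weights realizes the identity --- any activation merely composes with these rank-$d$ affine pieces and cannot repair the shortfall.

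The main obstacle is bridging the smooth linear argument to realistic nonlinear activations such as ReLU, and then transferring the conclusion from the intermediate equality $h = \mathrm{id}$ back to the input-level statement $\phi = \mathrm{id}$. For piecewise-linear activations I would partition $\mathbb{R}^D$ into polyhedral linear regions, apply the rank/parameter-counting argument cell by cell, and observe that the image of $g_d$ restricted to any cell is an affine piece of dimension at most $d$ and hence Lebesgue-null in $\mathbb{R}^D$; a finite union of such pieces cannot cover an open input region bijectively. Once this core obstruction at the bottleneck is established, pulling the conclusion through the outer encoder/decoder layers and absorbing bias terms is routine book-keeping.
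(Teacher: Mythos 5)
Your proposal is correct and in fact contains the paper's argument as one of its two prongs. The paper reduces the problem to the bottleneck via the data processing inequality (all information must survive the bottleneck, so it studies $\hat{\mathbf{Z}}_0 \equiv \mathbf{Z}_0$ directly), treats the two FC layers as purely affine, and then uses exactly your parameter count: $W^{(1)}W^{(2)} = \mathbf{I}_{D\times D}$ is $D^2$ scalar equations in $2Dd$ unknowns, so $d < \tfrac{D}{2}$ forbids a solution. Your additional rank/Jacobian (image-dimension) argument is a genuine strengthening the paper does not give: since $W^{(2)}$ has rank at most $d$, the product $W^{(1)}W^{(2)}$ (or, pointwise, $J_{g_d}J_{g_e}$) has rank at most $d < D$ and can never equal the identity, which makes the conclusion rigorous --- the bare counting of equations versus unknowns is only a heuristic for a bilinear system --- and shows the obstruction already holds for every $d < D$, so the $D/2$ threshold is an artifact of the counting argument rather than a sharp boundary. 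Your piecewise-linear extension (linear regions of ReLU, image of the decoder a finite union of at-most-$d$-dimensional affine pieces, hence Lebesgue-null and unable to cover an open set) also covers the actual architecture, in which the bottleneck FC layers are followed by BN and ReLU, a nonlinearity the paper's purely affine treatment silently drops. The one soft spot is your bridge from $\phi = \mathrm{id}$ at the input level to $h = \mathrm{id}$ on an open set at the bottleneck, which you assert via ``preservation of directional variation''; this is no less informal than the paper's own data-processing-inequality reduction, and your measure-zero image argument can anyway be applied to the full map $\phi$ directly, so it does not constitute a real gap.
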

%
%
\begin{proof}
According to \textit{data processing inequality} \cite{cover1999elements}, post-processing cannot increase information. It suggests that, to achieve identical mapping, the bottleneck of AE must preserve all information content for the input. Otherwise, the decoder would be unable to reconstruct the lost information. Therefore, we simplify the analysis by studying identical mapping in the bottleneck.

As shown in Fig.~\ref{fig:overview_rec}, we have $\mathbf{Z} = g_1(\mathbf{Z}_0)$ and $\hat{\mathbf{Z}}_0 = g_2(\mathbf{Z})$, where $g_1$ and $g_2$ are FC layers in the bottleneck. 
Assume $\hat{\mathbf{Z}}_0 \equiv \mathbf{Z}_0$. Considering $\hat{\mathbf{Z}}_0=g_2(g_1(\mathbf{Z}_0))= (\mathbf{Z}_0W^{(1)} + b^{(1)})W^{(2)}+b^{(2)} = \mathbf{Z}_0W^{(1)}W^{(2)} + (b^{(1)}W^{(2)}+b^{(2)})$, we have:
\begin{align}  \label{eq:weight}
    & W^{(1)}W^{(2)} = \mathbf{I}_{D \times D}, \\
    & b^{(1)}= \mathbf{0}_{1 \times d}, b^{(2)} = \mathbf{0}_{1 \times D}.  \label{eq:bias}
\end{align}
Here $W^{(1)} \in \mathbb{R}^{D\times d}, W^{(2)} \in \mathbb{R}^{d\times D}, d^{(1)} \in \mathbb{R}^{d}, d^{(2)} \in \mathbb{R}^{D}$ are learnable weights of $g_1$ and $g_2$. The solution of Eq.~\ref{eq:bias} definitely exits. If Eq.~\ref{eq:weight} has at least one solution, the number of independent scalar equations ($D^2$) should not exceed the number of variables ($2Dd$), i.e., $D^2 \leq 2Dd \Leftrightarrow d \geq \frac{D}{2}$.
As a contraposition, if $d < \frac{D}{2}$, Eq.~\ref{eq:weight} has no solution. Thus, if $d < \frac{D}{2}$, the AE cannot learn identical mapping. \hfill $\square$
\end{proof}

Proposition~\ref{prop:1} reveals that, AE with appropriate latent dimensions can effectively circumvent the undesired identical shortcut. Consequently, we contend that there is no need to introduce more complex modules to address this problem.

\subsection{The optimal solution for AE}
Although AE with $d<\frac{D}{2}$ does not suffer from identical shortcut, we observe that some undesirable abnormal regions are reconstructed due to the generalization ability of the network. This motivates us to theoretically analyze its reason and ideally, find the optimal solution for guiding the design process. Our argument with theoretical evidence is presented in Proposition~\ref{prop:2}.

\begin{figure}[t]
\centering
\includegraphics[width=0.7\linewidth]{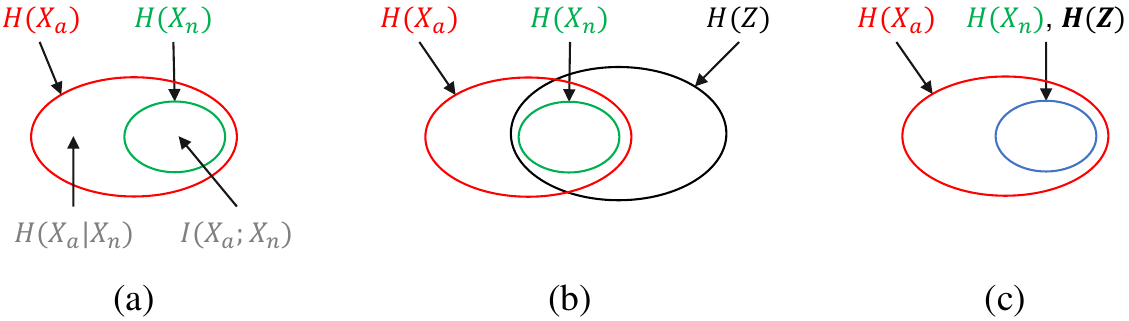}
\caption{Venn diagram of $H(\mathbf{X}_n), H(\mathbf{X}_a), H(\mathbf{Z}$).\textsuperscript{\ref{1}} (a) Relationship between $H(\mathbf{X}_n)$ and $H(\mathbf{X}_a)$; (b) $H(\mathbf{Z})$ of an AE trained with Eq.~\ref{eq:train}; (c) $H(\mathbf{Z})$ of an optimal AE.}
\label{fig:venn}
\end{figure}

\begin{proposition}  \label{prop:2}
Given an AE for anomaly detection, let $\mathbf{X}_n$ be the normal image, $\mathbf{X}_a$ be the abnormal image, and $\mathbf{Z}$ be the latent vector of the AE. An optimal AE should satisfy: (1) $I(\mathbf{X}_n; \mathbf{Z})=H(\mathbf{X}_n)$, and (2) $I(\mathbf{X}_a; \mathbf{Z})=H(\mathbf{X}_n)$.\footnote[1]{$I$ indicates the mutual information and $H$ indicates the information entropy \cite{shannon1948mathematical}.\label{1}}
\end{proposition}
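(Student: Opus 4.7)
The plan is to derive both identities by applying the data processing inequality to the Markov chain $\mathbf{X}\to\mathbf{Z}\to\hat{\mathbf{X}}$ induced by the deterministic encoder and decoder, and then invoking the ideal reconstruction objectives (Eq.~\ref{eq:obj1}--\ref{eq:obj2}) as equalities in those inequalities. The central observation is that, since $f_e$ and $f_d$ are deterministic maps, we always have $H(\mathbf{Z}\mid\mathbf{X})=0$ on the encoder side and $H(\hat{\mathbf{X}}\mid\mathbf{Z})=0$ on the decoder side, so mutual information precisely tracks how much of the input survives the bottleneck.

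For claim (1), I would start from the ideal objective $\phi(\mathbf{X}_n)=\mathbf{X}_n$, which makes $\mathbf{X}_n$ a deterministic function of $\mathbf{Z}=f_e(\mathbf{X}_n)$ via the composition $f_d\circ f_e$. Thus $H(\mathbf{X}_n\mid\mathbf{Z})=0$, and consequently $I(\mathbf{X}_n;\mathbf{Z})=H(\mathbf{X}_n)-H(\mathbf{X}_n\mid\mathbf{Z})=H(\mathbf{X}_n)$. This is exactly the picture in Fig.~\ref{fig:venn}(c), where the latent entropy covers precisely the normal-image entropy.

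For claim (2), I would combine two ingredients. First, because $\mathbf{Z}=f_e(\mathbf{X}_a)$ is deterministic in $\mathbf{X}_a$, $I(\mathbf{X}_a;\mathbf{Z})=H(\mathbf{Z})$. Second, the ideal objective $\phi(\mathbf{X}_a)=\mathbf{X}_n$ means the decoder recovers the healthy image from the abnormal latent, so by the same argument as in claim (1) we obtain the lower bound $H(\mathbf{Z})\geq I(\mathbf{X}_n;\mathbf{Z})=H(\mathbf{X}_n)$. To upgrade this to an equality I would appeal to a minimality principle: any surplus entropy $H(\mathbf{Z})-H(\mathbf{X}_n)>0$ can only encode information about $\delta$, and since the decoder is trained solely on normal reconstructions it will generically route that surplus back into $\hat{\mathbf{X}}_a$, contradicting $\hat{\mathbf{X}}_a\to\mathbf{X}_n$. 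The tightest latent therefore satisfies $H(\mathbf{Z})=H(\mathbf{X}_n)$, yielding $I(\mathbf{X}_a;\mathbf{Z})=H(\mathbf{X}_n)$.

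The main obstacle will be the upper-bound half of claim (2). The deterministic-map identities and the data processing inequality hand us the lower bound $H(\mathbf{Z})\geq H(\mathbf{X}_n)$ cleanly, but equality is really a \emph{minimality} principle for the latent entropy rather than a direct algebraic consequence of the reconstruction equation. I would phrase it as ``an optimal AE picks the smallest latent entropy consistent with reconstructing normal data,'' which both closes the proof and motivates the paper's subsequent design recommendation of explicitly constraining $H(\mathbf{Z})$ through latent-dimension reduction.
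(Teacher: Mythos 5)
Your proof is correct and reaches both identities, but part (2) takes a noticeably different route from the paper's, so it is worth comparing. For (1) the two arguments essentially coincide: you observe that $\hat{\mathbf{X}}_n=f_d(\mathbf{Z})=\mathbf{X}_n$ forces $H(\mathbf{X}_n\mid\mathbf{Z})=0$, while the paper sandwiches $I(\mathbf{X}_n;\mathbf{Z})$ between the data-processing lower bound $I(\mathbf{X}_n;\hat{\mathbf{X}}_n)=H(\mathbf{X}_n)$ and the generic upper bound $H(\mathbf{X}_n)$; these are two phrasings of the same fact. For (2) the paper first posits $I(\mathbf{X}_n;\mathbf{X}_a)=H(\mathbf{X}_n)$ (Eq.~\ref{eq:pre}), then postulates that an optimal latent carries no lesion-specific information, $I(\mathbf{Z};\mathbf{X}_a\mid\mathbf{X}_n)=0$, and closes via the decomposition $I(\mathbf{Z};\mathbf{X}_a)=I(\mathbf{Z};\mathbf{X}_a\mid\mathbf{X}_n)+I(\mathbf{Z};\mathbf{X}_n;\mathbf{X}_a)=H(\mathbf{X}_n)$. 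You instead use determinism of the encoder to write $I(\mathbf{X}_a;\mathbf{Z})=H(\mathbf{Z})$, derive the lower bound $H(\mathbf{Z})\geq H(\mathbf{X}_n)$ from $\hat{\mathbf{X}}_a=\mathbf{X}_n$, and close with an explicit minimality postulate $H(\mathbf{Z})=H(\mathbf{X}_n)$. The two extra postulates are in fact equivalent in this setting: with $H(\mathbf{Z}\mid\mathbf{X}_a)=0$ and $H(\mathbf{X}_n\mid\mathbf{Z})=0$ one has $H(\mathbf{Z})-H(\mathbf{X}_n)=H(\mathbf{Z}\mid\mathbf{X}_n)=I(\mathbf{Z};\mathbf{X}_a\mid\mathbf{X}_n)$, so ``no surplus latent entropy'' and ``no lesion information in $\mathbf{Z}$'' coincide, and neither is a purely algebraic consequence of $\hat{\mathbf{X}}_a=\mathbf{X}_n$ alone (a decoder could in principle discard lesion information present in $\mathbf{Z}$) --- the paper's proof quietly relies on the same kind of desideratum at exactly the point where you flag your ``minimality principle.'' What each route buys: the paper's framing makes the ``$\mathbf{Z}$ contains no abnormal information'' picture (Fig.~\ref{fig:venn}) the primitive and matches its Venn-diagram narrative, at the cost of invoking interaction information and the premise $I(\mathbf{X}_n;\mathbf{X}_a)=H(\mathbf{X}_n)$; your route is slightly more elementary, dispenses with both of those ingredients, and by stating the equality half of (2) as a latent-entropy minimality requirement it leads directly to the regularization $\min H(\mathbf{Z})$ of Eq.~\ref{eq:min_z} that the paper advocates.
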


\begin{proof}
Medical abnormal images typically consist of the healthy version with an addition of lesion regions, as shown in Fig.~\ref{fig:overview_rec} ($\mathbf{X}_a$). Therefore, all possible normal patterns could appear in abnormal images. This can be represented as 
\begin{equation} \label{eq:pre}
I(\mathbf{X}_n; \mathbf{X}_a) = H(\mathbf{X}_n)
\end{equation}
As shown in Fig.~\ref{fig:venn}(a), this reveals that the information content of abnormal data $H(\mathbf{X}_a)$ comprises the information content of normal data $H(\mathbf{X}_n)$ and the information content specific to lesions $H(\mathbf{X}_a|\mathbf{X}_n)$.

The reconstruction process can be formulated as a Markov chain: $\mathbf{X} \rightarrow \mathbf{Z} \rightarrow \hat{\mathbf{X}}$. According to \textit{data processing inequality} \cite{cover1999elements}, we have: 
\begin{equation} \label{eq:data_prc}
    I(\mathbf{X};\mathbf{Z}) \geq I(\mathbf{X};\hat{\mathbf{X}}).
\end{equation}

\noindent
(1) For a normal input $\mathbf{X}_n$, according to Eq.~\ref{eq:obj1}, the optimal AE should satisfy $\hat{\mathbf{X}}_n = \mathbf{X}_n$. Combining Eq.~\ref{eq:data_prc}, we have $I(\mathbf{X}_n;\mathbf{Z}) \geq I(\mathbf{X}_n;\hat{\mathbf{X}}_n) = I(\mathbf{X}_n; \mathbf{X}_n) = H(\mathbf{X}_n)$. Using the property of mutual information and joint entropy \cite{cover1999elements}, we have $I(\mathbf{X}_n;\mathbf{Z})=H(\mathbf{X}_n)+H(\mathbf{Z})-H(\mathbf{X}_n,\mathbf{Z}) \leq H(\mathbf{X}_n)$. Hence, $I(\mathbf{X}_n;\mathbf{Z}) = H(\mathbf{X}_n)$. 

\smallskip
\noindent
(2) For an abnormal input $\mathbf{X}_a=\mathbf{X}_n + \delta$, according to Eq.~\ref{eq:obj2}, the optimal AE should satisfy $\hat{\mathbf{X}}_a = \mathbf{X}_n$.
To prevent anomaly $\delta$ from being reconstructed, $\mathbf{Z}$ should not contain abnormal information $H(\mathbf{X}_a|\mathbf{X}_n)$, so $I(\mathbf{Z};\mathbf{X}_a|\mathbf{X}_n)=0$. Combining the property of mutual information \cite{cover1999elements}, we have $I(\mathbf{Z};\mathbf{X}_a) = I(\mathbf{Z};\mathbf{X}_a|\mathbf{X}_n) + I(\mathbf{Z};\mathbf{X}_n;\mathbf{X}_a)=I(\mathbf{Z};\mathbf{X}_n;\mathbf{X}_a)$. Considering $I(\mathbf{Z};\mathbf{X}_n)=H(\mathbf{X}_n)$ and $I(\mathbf{X}_a;\mathbf{X}_n)=H(\mathbf{X}_n)$, we simplify that $I(\mathbf{Z};\mathbf{X}_a)=I(\mathbf{Z};\mathbf{X}_n;\mathbf{X}_a)=I(\mathbf{X}_n;\mathbf{X}_a)=H(\mathbf{X}_n)$.   \hfill $\square$

\end{proof}

Proposition~\ref{prop:2} demonstrates the conditions that the latent vector of an optimal AE should satisfy: (1) it should provide all information content of normal data, but (2) should not contain any information content of abnormal information, as depicted in Fig.~\ref{fig:venn}(c). Previous AE trained with only Eq.~\ref{eq:train} is optimized to achieve $\hat{\mathbf{X}}_n = \mathbf{X}_n$, then it has $ I(\mathbf{X}_n; \mathbf{Z}) = H(\mathbf{X}_n), I(\mathbf{X}_a; \mathbf{Z}) \geq H(\mathbf{X}_n).$
Therefore, this AE satisfies Proposition~\ref{prop:2}(1), but fails to fulfill (2). The Venn diagram in Fig.~\ref{fig:venn}(b) depicts this situation, indicating that its $H(\mathbf{Z})$ extends beyond the scope of $H(\mathbf{X}_n)$ and inadvertently provides information about lesions $H(\mathbf{X}_a|\mathbf{X}_n)$, leading to false negatives. This is an intractable issue since abnormal images are unavailable during training. To address this problem and achieve Proposition~\ref{prop:2}(2), it is ideal for $H(\mathbf{Z})$ to be minimized and approach the same level as $H(\mathbf{X}_n)$, transforming the scope of $H(\mathbf{Z})$ from Fig.~\ref{fig:venn}(b) to (c). This requirement can be formulated as a regularization:
\begin{equation}  \label{eq:min_z}
    \min H(\mathbf{Z}), \text{if}~H(\mathbf{Z}) > H(\mathbf{X}_n).
\end{equation}
Combining Eq.~\ref{eq:train} with \ref{eq:min_z}, $H(\mathbf{Z})$ no longer contains abnormal information but retains normal information, thereby guaranteeing the reconstruction to be normal. 

In summary, our theory suggests that in AD, AE tends to benefit from minimizing the entropy of the latent space, aiming to approach the entropy of normal data. This ensures that anomalies cannot be represented and reconstructed by the model. Meanwhile, for more complex datasets with a higher information content, it is required to increase the entropy of the latent space to match that of the normal data. In practice, this can be achieved explicitly through latent dimension adjustment, or implicitly by enforcing latent space restrictions \cite{kingma2013auto,gong2019memorizing}.

\section{Experiments}  \label{sec:exp}

\subsection{Datasets and implementation details}
We conduct experiments on four datasets, including RSNA\footnote[1]{\url{https://www.kaggle.com/c/rsna-pneumonia-detection-challenge}\label{data1}}, VinDr-CXR\footnote[2]{\url{https://www.kaggle.com/c/vinbigdata-chest-xray-abnormalities-detection}\label{data2}} \cite{nguyen2022vindr}, Brain Tumor\footnote[3]{\url{https://www.kaggle.com/datasets/masoudnickparvar/brain-tumor-mri-dataset}\label{data3}}, and BraTS2021 \cite{baid2021rsna} for evaluation. We follow \cite{cai2022dual,cai2023dual} to build the first three datasets and implement AE. Additionally, we reorganize BraTS2021 \cite{baid2021rsna} to evaluate pixel-level AD. The image-level performance is measured using the area under the ROC curve (AUC) and average precision (AP), while at the pixel level, it is assessed by pixel-level AP ($\text{AP}_\text{pix}$) and the best possible Dice score ($\lceil$Dice$\rceil$). Details of data and implementation are in the Supplementary. 


\subsection{Results and analysis}
\subsubsection{Validation of Proposition~\ref{prop:1}.} Fig.~\ref{fig:rec_err} presents reconstruction errors w.r.t. the latent dimension on RSNA dataset, which shows trends that align with our theory. Firstly, we observe that when $d$ is small, an increase in $d$ results in a decrease of reconstruction errors. While $d>\frac{D}{2}=512$, an increase in $d$ does not lead to smaller errors. This validates that AE with a small $d$ does not encounter identical shortcut, while for $d>\frac{D}{2}$, the bottleneck becomes saturated. Secondly, even when $d>\frac{D}{2}$, errors of normal training data are smaller than those of normal testing data, which, in turn, are smaller than errors of abnormal testing data. This suggests that even if the bottleneck is saturated, identical mapping does not occur in AE due to the limited capacity of the network. 

\begin{figure}[!t]
\centering
\includegraphics[width=0.45\linewidth]{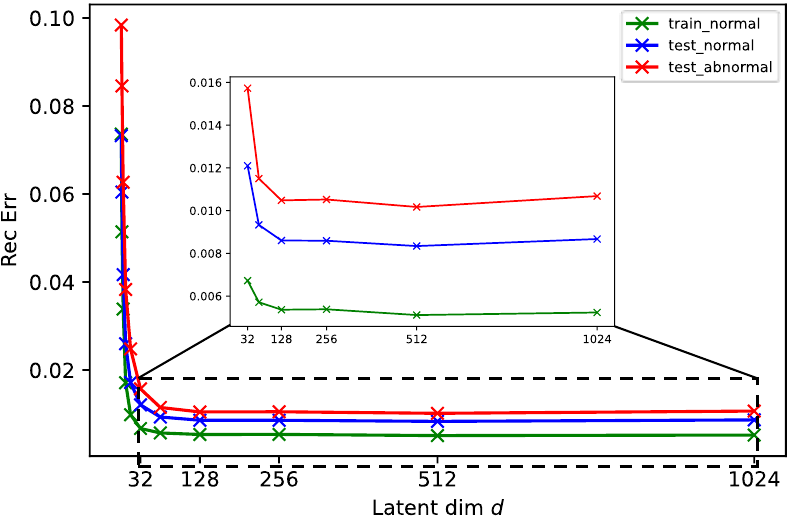}
\caption{Reconstruction errors on RSNA dataset w.r.t. the latent dimension.}
\label{fig:rec_err}
\end{figure}

\begin{table}[!b]
\centering
\caption{Performance of AE with different values of latent dimension $d$.} \label{tab:latent}
\resizebox{\linewidth}{!}{
\begin{tabular}{ccccccccccc}
\toprule
\multirow{2}{*}{\makecell[c]{Latent dim \\ $d$}} & \multicolumn{2}{c}{ RSNA } & \multicolumn{2}{c}{ VinDr-CXR } & \multicolumn{2}{c}{ Brain Tumor } & \multicolumn{4}{c}{ BraTS2021 } \\
\cmidrule(l){2-3}\cmidrule(l){4-5} \cmidrule(l){6-7}  \cmidrule(l){8-11}
                 & AUC & AP & AUC & AP & AUC & AP & AUC & AP & $\text{AP}_\text{pix}$ & $\lceil$Dice$\rceil$ \\ \midrule
128 & $60.5_{\pm 0.6}$ & $60.3_{\pm 0.4}$ & $48.6_{\pm 0.2}$ & $51.9_{\pm 0.3}$ & $86.5_{\pm 0.3}$ & $76.9_{\pm 0.4}$ & $80.5_{\pm 0.9}$ & $90.5_{\pm 0.7}$ & $25.6_{\pm 2.9}$ & $32.7_{\pm 2.3}$     \\
64  & $60.7_{\pm 0.6}$ & $60.7_{\pm 0.6}$ & $49.0_{\pm 0.2}$ & $52.0_{\pm 0.3}$ & $\underline{86.8_{\pm 0.1}}$ & $\underline{77.2_{\pm 0.2}}$ & $81.3_{\pm 0.2}$ & $91.0_{\pm 0.2}$ & $20.8_{\pm 3.1}$ & $28.8_{\pm 2.8}$  \\
32  & $62.8_{\pm 0.4}$ & $62.5_{\pm 0.2}$ & $52.5_{\pm 0.4}$ & $55.6_{\pm 0.5}$ & $\mathbf{87.1_{\pm 0.2}}$ & $\mathbf{77.8_{\pm 0.2}}$ & $\underline{82.1_{\pm 0.3}}$ & $\underline{91.6_{\pm 0.1}}$ & $25.2_{\pm 3.8}$ & $32.7_{\pm 3.3}$  \\
16  & $67.5_{\pm 0.9}$ & $66.7_{\pm 0.5}$ & $56.4_{\pm 0.4}$ & $60.2_{\pm 0.5}$ & $85.9_{\pm 0.3}$ & $76.9_{\pm 0.1}$ & $\mathbf{82.6_{\pm 0.1}}$ & $\mathbf{92.0_{\pm 0.1}}$ & $33.2_{\pm 2.2}$ & $39.2_{\pm 1.7}$  \\
8   & $\underline{69.4_{\pm 1.2}}$ & $\underline{68.3_{\pm 0.8}}$ & $57.4_{\pm 0.3}$ & $61.7_{\pm 0.3}$ & $84.1_{\pm 0.6}$ & $75.5_{\pm 0.8}$ & $81.0_{\pm 0.5}$ & $90.9_{\pm 0.3}$ & $\mathbf{36.1_{\pm 7.9}}$ & $\mathbf{42.3_{\pm 5.9}}$ \\
4   & $\mathbf{72.9_{\pm 2.1}}$ & $\mathbf{70.3_{\pm 0.9}}$ & $\mathbf{61.3_{\pm 0.6}}$ & $\mathbf{64.1_{\pm 0.8}}$ & $70.8_{\pm 1.4}$ & $64.1_{\pm 1.0}$ & $79.4_{\pm 0.1}$ & $89.6_{\pm 0.3}$ & $33.4_{\pm 5.4}$ & $40.0_{\pm 4.2}$ \\
2   & $67.5_{\pm 0.8}$ & $65.4_{\pm 1.0}$ & $\underline{60.5_{\pm 0.3}}$ & $\underline{59.7_{\pm 0.2}}$ & $47.3_{\pm 1.7}$ & $48.3_{\pm 1.0}$ & $77.2_{\pm 0.3}$ & $87.5_{\pm 0.1}$ & $30.0_{\pm 2.5}$ & $38.0_{\pm 1.7}$ \\
1   & $66.5_{\pm 0.4}$ & $63.6_{\pm 0.9}$ & $59.4_{\pm 1.6}$ & $58.6_{\pm 1.6}$ & $36.6_{\pm 0.3}$ & $42.4_{\pm 0.6}$ & $72.8_{\pm 0.8}$ & $84.5_{\pm 0.7}$ & $\underline{35.9_{\pm 5.1}}$ & $\underline{40.8_{\pm 3.4}}$ \\
\bottomrule
\end{tabular}
}
\end{table}

\subsubsection{Validation of Proposition~\ref{prop:2}.} We validate Proposition~\ref{prop:2} by controlling $H(\mathbf{Z})$ via latent dimension adjustment. Tab.~\ref{tab:latent} presents the performance of AE with different $d$ on AD, which aligns with our proposition. Firstly, reducing $d$ from 128 to 1 initially improves the performance and then leads to deterioration, with the optimal $d$ typically being quite small. Notably, the performance of $d=4$ surpasses that of $d=128$ by more than 10\% AUC on both RSNA and VinDr-CXR datasets, indicating that normal information $H(\mathbf{X}_n)$ can be represented by a compact vector. On the other hand, a too large value of $d$ may result in generalization to abnormal samples, while too small value cannot sufficiently represent normal information, leading to performance deterioration. 

Secondly, the optimal $d$ varies across different image modalities, reflecting differences in $H(\mathbf{X}_n)$. For RSNA and VinDr-CXR datasets, the optimal $d$ is 4, whereas for Brain Tumor and BraTS2021 datasets, it is 32 and 16, respectively. This disparity can be attributed to the fact that MRIs offer more information content compared to X-rays. MRIs are volumetric scans that capture detailed tissue information, exhibiting greater variations among healthy subjects and encompassing variations among axial slices. These characteristics enable MRIs to surpass the information content of X-rays, necessitating a larger $d$ to effectively expand $H(\mathbf{Z})$ for MRI datasets.

\subsubsection{Comparison with other methods.} Tab.~\ref{tab:methods} compares the performance of typical AD methods with AE using the optimal latent dimension. Among these methods, MemAE \cite{gong2019memorizing} and VAE \cite{kingma2013auto} incorporate specific designs for latent space restriction, CeAE \cite{zimmerer2018context} utilizes an inpainting task to aid in repairing abnormal regions. Following \cite{cai2022dual}, we set $d=16$ by default for reconstruction methods. 

The results further supports Proposition~\ref{prop:2}. Especially, we observe that directly adjusting the latent dimension in AE (AE[$d_{optimal}$]) yields superior performance compared to more complex approaches that enforce latent space restrictions. AE[$d_{optimal}$] outperforms VAE, MemAE, and CeAE on all four datasets, demonstrating the superiority of the simple and explicit latent dimension adjustment over the more complex implicit restrictions.

\begin{table}[!t]
\centering
\caption{Comparison of typical AD methods. We set $d=16$ by default. $d_{optimal}=4, 4, 32, 16$ for RSNA, VinDr-CXR, Brain Tumor, and BraTS2021 datasets, respectively.}
\label{tab:methods}
\resizebox{\linewidth}{!}{
\begin{tabular}{lcccccccccc}
\toprule 
\multirow{2}{*}{Method} & \multicolumn{2}{c}{ RSNA } & \multicolumn{2}{c}{ VinDr-CXR } & \multicolumn{2}{c}{ Brain Tumor } & \multicolumn{4}{c}{ BraTS2021 } \\
\cmidrule(r){2-3}\cmidrule(r){4-5} \cmidrule(r){6-7} \cmidrule(){8-11}
& AUC & AP & AUC & AP & AUC & AP & AUC & AP & $\text{AP}_\text{pix}$ & $\lceil$Dice$\rceil$ \\
\midrule 
AE            
    & $67.5_{\pm 0.9}$ & $66.7_{\pm 0.5}$ & $56.4_{\pm 0.4}$ & $60.2_{\pm 0.5}$ & $85.9_{\pm 0.3}$ & $76.9_{\pm 0.1}$ & $82.6_{\pm 0.1}$ & $92.0_{\pm 0.1}$ & $33.2_{\pm 2.2}$ & $39.2_{\pm 1.7}$ \\  
VAE \cite{kingma2013auto}        
    & $67.9_{\pm 0.8}$ & $67.0_{\pm 0.8}$ & $56.4_{\pm 0.4}$ & $60.1_{\pm 0.5}$ & $85.3_{\pm 0.4}$ & $76.5_{\pm 0.3}$ & $80.6_{\pm 0.8}$ & $90.9_{\pm 0.5}$ & $\mathbf{44.0}_{\pm 5.3}$ & $\mathbf{47.1}_{\pm 3.4}$ \\
MemAE \cite{gong2019memorizing}  
    & $69.6_{\pm 0.4}$ & $68.1_{\pm 0.4}$ & $56.9_{\pm 0.9}$ & $60.9_{\pm 0.9}$ & $82.6_{\pm 0.6}$ & $74.3_{\pm 0.2}$ & $63.9_{\pm 7.7}$ & $78.5_{\pm 6.9}$ & $22.8_{\pm 5.6}$ & $30.8_{\pm 4.8}$ \\
CeAE \cite{zimmerer2018context}
    & $68.0_{\pm 0.5}$ & $66.8_{\pm 0.6}$ & $56.1_{\pm 0.2}$ & $60.1_{\pm 0.3}$ & $85.3_{\pm 1.5}$ & $76.0_{\pm 1.5}$ & $81.7_{\pm 1.5}$ & $91.5_{\pm 0.7}$ & $28.6_{\pm 3.2}$ & $35.8_{\pm 2.4}$ \\
\rowcolor{green!20}
AE[$d_{optimal}$]
    & $\mathbf{72.9}_{\pm 2.1}$ & $\mathbf{70.3}_{\pm 0.9}$ & $\mathbf{61.3}_{\pm 0.6}$ & $\mathbf{64.1}_{\pm 0.8}$ & $\mathbf{87.1}_{\pm 0.2}$ & $\mathbf{77.8}_{\pm 0.2}$ & $\mathbf{82.6}_{\pm 0.1}$ & $\mathbf{92.0}_{\pm 0.1}$ & $33.2_{\pm 2.2}$ & $39.2_{\pm 1.7}$ \\
\bottomrule
\end{tabular}
}
\end{table}

\section{Conclusion and Discussion}
This paper investigated a theoretical analysis of AE in anomaly detection. We prove that an appropriate latent dimension can avoid "identical shortcut" in AE. By leveraging information theory, the optimal solution of AE is uncovered. Our findings indicate that, apart from the reconstruction loss, imposing a constraint on the entropy of the latent space is crucial for preventing the reconstruction of abnormal information. Experiments validate our theoretical framework, and highlight the efficacy of simple latent dimension reduction in constraining the entropy and achieving significant performance improvements. Overall, this paper provides a theoretical foundation for guiding the design of AE in anomaly detection, facilitating the development of more effective and reliable anomaly detection methods. 

However, the current approach for adjusting the latent dimension relies on evaluation results to find the optimal value, which is undesirable. To overcome this limitation, our future work aims to quantify the information entropy of normal training data, $H(\mathbf{X}_n)$, and develop self-adaptive methods that dynamically constrain $H(\mathbf{Z})$ to approach $H(\mathbf{X}_n)$ on different datasets. This approach would eliminate the need for manual selection of the latent dimension and enhance the adaptability of AE in various anomaly detection scenarios.

\begin{credits}
\subsubsection{\ackname} This work was supported by the Research Grants Council of Hong Kong (No. R6003-22 and T45-401/22-N).

\subsubsection{\discintname}
The authors have no competing interests to declare that are relevant to the content of this article.
\end{credits}
%
%
%
\bibliographystyle{splncs04}
\bibliography{Paper-0616}

\newpage
\section*{\LARGE Supplementary}
\appendix
\renewcommand\thetable{\Alph{section}\arabic{table}}
\renewcommand\thefigure{\Alph{section}\arabic{figure}}

\section{Datasets}
We use four datasets for evaluation, where three are from \cite{cai2023dual}: \\
\textbf{RSNA Dataset}\textsuperscript{\ref{data1}}. The dataset contains 8851 normal and 6012 lung opacity CXRs. In experiments, we use 3851 normal images as the normal training dataset $\mathcal{D}_{train}$, and 1000 normal and 1000 abnormal (lung opacity) images as the test dataset $\mathcal{D}_{test}$. \\
\textbf{VinDr-CXR Dataset}\textsuperscript{\ref{data2}} \cite{nguyen2022vindr}. The dataset contains 10606 normal and 4394 abnormal CXRs that include 14 categories of anomalies in total. In experiments, we use 4000 normal images as $\mathcal{D}_{train}$, and 1000 normal and 1000 abnormal images as $\mathcal{D}_{test}$. \\
\textbf{Brain Tumor Dataset}\textsuperscript{\ref{data3}}. The dataset contains 2000 MRI slices with no tumors, 1621 with glioma, and 1645 with meningioma. The glioma and meningioma are regarded as anomalies. In experiments, we use 1000 normal images (with no tumors) as $\mathcal{D}_{train}$, and 600 normal and 600 abnormal images (300 with glioma and 300 with meningioma) as $\mathcal{D}_{test}$. \\

Additionally, \textbf{BraTS2021} \cite{baid2021rsna} is reorgnized for pixel-level anomaly segmentation in this paper. It provides 1251 MRI cases with the resolution $155 \times 240 \times 240$, as well as the corresponding voxel-level annotation for tumor regions. Each case has multiple modalities, including T1, T1ce, T2, and flair. We use only the flair in experiments as it is the most sensitive to tumor regions. In preprocessing, scans are cropped into $70 \times 208 \times 208$ to remove the empty corners. To construct $\mathcal{D}_{train}$, 4211 normal 2D axial slices are extracted from 1051 MRI scans, while the extracted slices are at least 5 slices apart from each other, ensuring enough different content between neighbors. For $\mathcal{D}_{test}$, similarly, 828 normal and 1948 tumor slices are extracted from the remaining 200 MRI scans. 

Tab.~\ref{tab:data_repartition} summarizes these datasets.

\begin{table}
\centering
\caption{Summary of the datasets.}   \label{tab:data_repartition}

\begin{tabular}{llll}
\toprule
\multicolumn{1}{l}{\multirow{2}{*}{\textbf{Dataset}}}       & \multicolumn{1}{l}{\multirow{2}{*}{\textbf{Modality}}}   &\multicolumn{2}{l}{\textbf{Repartition}}      \\ \cmidrule(){3-4} 
\multicolumn{1}{c}{}                                        & \multicolumn{1}{c}{}           & $\mathcal{D}_{train}$ & $\mathcal{D}_{test}$ (Normal+Abnormal)   \\ \midrule
RSNA            & Chest X-ray                    & 3851                  & 1000+1000     \\
VinDr-CXR       & Chest X-ray                    & 4000                  & 1000+1000     \\
Brain Tumor     & Brain MRI                      & 1000                  & 600+600       \\
BraTS2021       & Brain MRI                      & 4211                  & 828+1948      \\
\bottomrule
\end{tabular}
\end{table}

\section{Implementation details}
The AE in our experiments comprises an encoder and a decoder. The encoder consists of four convolutional layers with kernel size 4 and stride 2, whose channel sizes are 16-32-64-64. The decoder consists of four deconvolutional layers with the same kernel size and stride as the encoder, and the channel sizes are 64-32-16-1. The encoder and decoder are connected by four fully connected layers, where the dimensions are 1024-$D$-$d$-$D$-1024. $D$ is always set to 1024 in our experiments, and $d$ is the latent dimension of AE. All layers except the output layer are followed by batch normalization (BN) and ReLU. For fair comparison, other reconstruction methods are implemented based on this backbone consistently. All the input images are resized to $64 \times 64$, and all the reconstruction models are trained for 250 epochs using the Adam optimizer with a learning rate of 1e-3. 

The performance on image-level anomaly detection is assessed with the area under the ROC curve (AUC) and average precision (AP). The performance on pixel-level anomaly segmentation is assessed with the pixel-level AP ($\text{AP}_\text{pix}$) and the best possible Dice score ($\lceil$Dice$\rceil$).

\end{document}